\documentclass[11pt]{article}
\usepackage{acl}

\usepackage{times}
\usepackage{latexsym}
\usepackage[T1]{fontenc}
\usepackage[utf8]{inputenc}
\usepackage{microtype}

\usepackage{amsmath,amssymb}
\usepackage{booktabs}
\usepackage{multirow}
\usepackage{array}
\usepackage{enumitem}
\usepackage{graphicx}
\usepackage{placeins}
\usepackage{colortbl}
\usepackage{xcolor}
\usepackage{amsthm}

\newtheorem{theorem}{Theorem}
\newtheorem{proposition}{Proposition}

\definecolor{bestcolor}{HTML}{E3F2FD} 
\definecolor{secondcolor}{HTML}{E8F5E9} 

\title{StepOPSD: Step-Aware Online Preference Self-Distillation for Agent Reinforcement Learning}

\author{
Yanfei Zhang$^{1}$, Xu Lin$^{2}$, Chenglin Wu$^{3}$ \\
$^{1}$Independent Researcher \\
$^{2}$Tencent \\
$^{3}$DeepWisdom
}

\newcommand{\method}{StepOPSD}
\newcommand{\grpo}{GRPO}

\newcommand{\searchr}{Search-R1}

\newcommand{\best}[1]{\cellcolor{bestcolor}\textbf{\underline{#1}}}
\newcommand{\second}[1]{\cellcolor{secondcolor}\textbf{#1}}

\begin{document}
\maketitle

\begin{abstract}
Reinforcement learning for multi-turn agents suffers from a credit-assignment mismatch: rewards are sparse and trajectory-level, while success often hinges on a few local decisions. Existing online policy distillation (OPD) provides denser token-level supervision, but typically treats heterogeneous agent trajectories as monolithic strings rather than causal interaction units. We present \method{}, a post-rollout preference self-distillation framework that takes the agent step as the unit of credit redistribution. \method{} decomposes trajectories into action-centered step segments, rescoring them under hindsight-enriched teacher contexts and converting token-level log-probability gaps into sign-preserving advantage shaping with a normalized per-step credit budget before the \grpo{} update. Across ALFWorld and Search-QA with \texttt{Qwen3-1.7B} and \texttt{Qwen2.5-3B-Instruct}, \method{} attains best or second-best results on subsets most sensitive to local causal errors, including first-place performance on ALFWorld \texttt{Heat} ($79.1\%$), \texttt{PickTwo} ($95.0\%$), Search-QA \texttt{TriviaQA} ($61.6\%$), and tied-best performance on \texttt{HotpotQA} ($40.4\%$). The results further reveal a consistent two-knob law: smaller $\alpha_{clip}$ acts as a broadly stabilizing local trust region, whereas the optimal global mixing strength $\lambda_{mix}$ remains task-dependent. These findings suggest that step-aware distillation is most useful when trajectory-level rewards are weakly aligned with the local action that determines downstream success.
\end{abstract}

\begin{figure*}[t]
  \centering
  \IfFileExists{stepopsd_review.png}{
    \includegraphics[width=\textwidth]{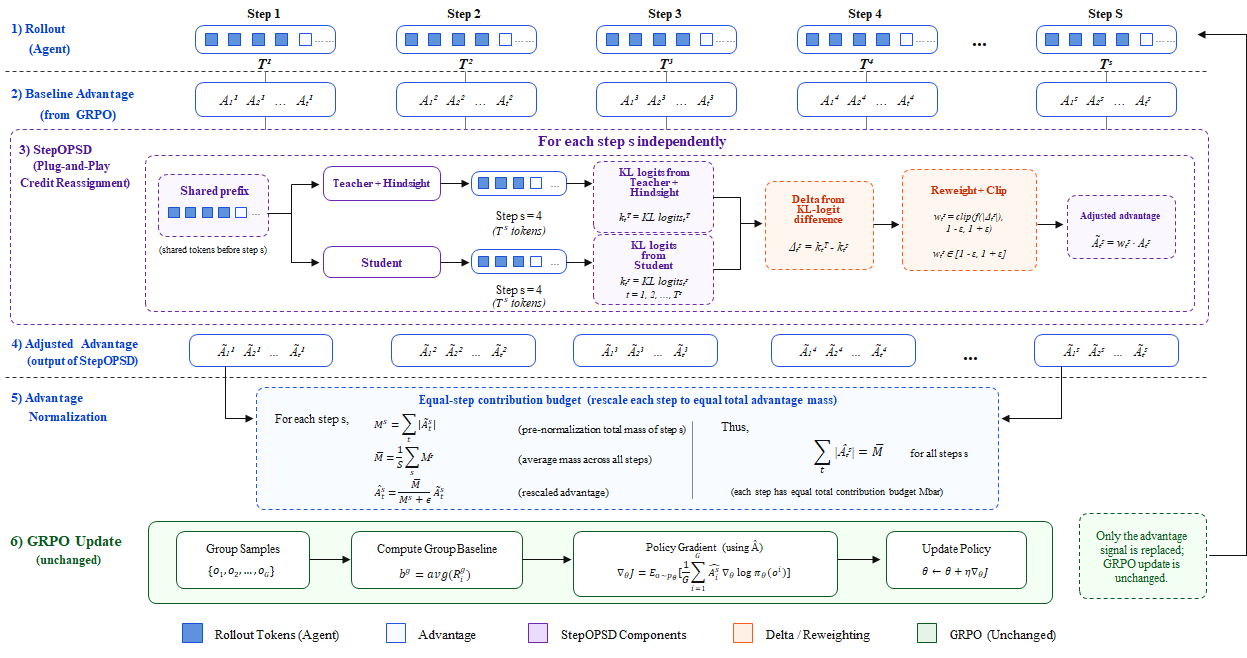}
  }{
    \fbox{
      \parbox{0.95\textwidth}{
        \centering
        Figure placeholder. Save the overview image as
        \texttt{stepopsd_review.png} under the current ACL template
        directory to render it here.
      }
    }
  }
  \caption{Overview of \method{}. \method{} decouples online environment interaction from offline credit shaping. While the base \grpo{} rollout dynamics remain untouched, post-rollout trajectories are structurally parsed into causal action boundaries. By rescoring these isolated spans under hindsight-enriched teacher contexts, \method{} translates token-level preference gaps into a direct modulation of the RL advantage—injecting dense, step-aware supervision without distorting the primary trajectory-level reward signal.}
  \label{fig:stepopsd-overview}
\end{figure*}

\section{Introduction}
Agentic post-training for Large Language Models (LLMs) is increasingly shaped by two complementary paradigms \citep{yao2023react,schick2023toolformer,jin2025search}: Reinforcement Learning (RL), which optimizes task outcomes from environment feedback, and teacher-guided distillation, which supplies denser token-level supervision. In long-horizon agents \citep{wu-etal-2026-spark,yang2026opid,wu2026seed}, however, naively combining these signals creates a severe credit-assignment mismatch. A single sparse reward must supervise dozens of decisions, even though success often hinges on only one or two of them.

This mismatch has two structural sources. \textit{First, failure is highly localized.} A malformed query, an invalid action, or a premature answer can ruin a rollout spanning thousands of tokens, yet standard RL broadcasts the same trajectory-level signal across the full response. \textit{Second, agent trajectories are heterogeneous.} Unlike static reasoning traces \citep{wu-etal-2026-beyond-examples}, they interleave immutable environment observations with model-generated action spans. Treating the full trajectory as a uniform string wastes supervision on non-controllable tokens and blurs the causal boundaries that make hindsight feedback useful.

These observations suggest that the key problem is not merely how to inject a stronger teacher, but \emph{where} credit should be redistributed once hindsight information is available. For multi-turn agents, the natural unit is not the whole response but the action-centered step. A useful distillation signal should therefore respect the step structure of interaction and concentrate optimization mass on the local decisions that actually determine downstream outcomes.

Driven by this view, we present \method{} (Step-Aware Online Preference Distillation). The term \emph{preference} is intentional: unlike policy distillation, our method does not ask the student to imitate the teacher distribution. Instead, it uses OPSD-style teacher--student rescoring to induce bounded local preferences over realized step tokens, while the RL advantage still determines the update direction. In this sense, the teacher does not replace the policy objective; it only reshapes where credit is assigned within the sampled trajectory.

Operationally, \method{} decomposes completed trajectories into causal step segments, synthesizes hindsight-enriched teacher contexts for those spans, and converts teacher--student log-probability gaps into step-aware advantage shaping before the \grpo{} update. Unlike RLSD \citep{yang2026rlsd}, which re-weights all tokens using self-divergence, or SDAR \citep{lu2026sdar}, which applies soft token-level gating over the full sequence, \method{} localizes supervision to the action-centered spans that actually drive agent behavior. The intervention is non-intrusive only in the narrow sense that it never alters the online rollout dynamics; it does, however, actively perform post-rollout, sign-preserving advantage shaping through a specialized reward manager and a lightweight trainer subclass.

To summarize, our primary contributions are three-fold:
\begin{itemize}[leftmargin=1.5em]
    \item \textbf{Step-Aware Online Preference Distillation:} We introduce \method{}, which shifts distillation from monolithic responses to action-centered step segments and repurposes teacher rescoring as post-rollout credit redistribution rather than policy imitation.
    \item \textbf{Modular, Non-Intrusive Architecture:} A drop-in implementation atop \searchr{}/\grpo{} that integrates diverse hindsight sources and span-selection strategies without changing online rollout dynamics.
    \item \textbf{Empirical Validation:} \method{} attains multiple best and second-best results on the subsets most sensitive to local decision errors, including first-place scores on ALFWorld \texttt{Heat} and \texttt{PickTwo}, and on Search-QA \texttt{TriviaQA} and \texttt{HotpotQA}. Beyond average gains, the results reveal a two-knob law: tighter local clipping through smaller $\alpha_{clip}$ is broadly stabilizing, whereas the optimal global mixing strength $\lambda_{mix}$ remains task-dependent, with weaker shaping favoring embodied control and stronger shaping favoring retrieval-centric QA.
\end{itemize}

\section{Method}
\subsection{The Credit Assignment Bottleneck}
The standard paradigm for agentic fine-tuning relies on Group Relative Policy Optimization (\grpo{}) \citep{schulman2017ppo,deepseekr1}, where a policy $\pi_{\theta}$ generates a multi-step trajectory $\tau = (s_1, a_1, o_1, \dots, s_T, a_T)$, and a single terminal reward $R(\tau)$ is broadcast across all tokens. This is fundamentally misaligned with the nature of long-horizon tasks. A rollout spanning thousands of tokens often fails due to a single localized error—a malformed query, an invalid action, or a premature conclusion. Standard RL blindly smears the penalty across the entire sequence, forcing the agent to guess which step was fatal. Instead of learning a dense value model (which is notoriously unstable and prone to hallucination in agentic domains), \method{} addresses this bottleneck through post-rollout hindsight distillation. Rather than introducing a standalone objective, \method{} acts as a surgical intervention within the existing pipeline:
\begin{equation}
\begin{split}
&\text{rollout} \rightarrow \text{reward} \rightarrow \text{step extraction} \rightarrow \\
&\text{teacher--student rescoring} \rightarrow \\
&\text{advantage shaping} \rightarrow \text{policy update}.
\end{split}
\end{equation}
By decoupling the online interaction from offline credit shaping, we inject step-aware supervision directly into the \grpo{} advantage, preserving the stability of the base RL algorithm while surgically correcting localized errors.

\subsection{Isolating Causal Action Spans}
Agent trajectories are highly heterogeneous, interleaving immutable environment observations ($o_t$) with policy-generated actions ($a_t$). Applying distillation to the entire trajectory wastes modeling capacity on observation tokens that the policy cannot control. To resolve this, we structurally parse the completed trajectory into atomic step segments aligned with task-specific tags (e.g., \texttt{<action>...</action>}). In our implementation, we strictly employ \texttt{action\_only} extraction for embodied tasks (ALFWorld), where rigid commands are the sole causal drivers. For knowledge-intensive tasks (Search-QA), we utilize \texttt{clean\_step\_no\_observation}, which includes the agent's internal reasoning but masks out the retrieved external knowledge.

\subsection{Hindsight-Privileged Rescoring}
To correct a failed step, the agent must understand what it should have done. We achieve this by contrasting the agent's standard generation probability against a hindsight-privileged teacher. For each extracted step $k$, we define a student context $c_k^S$ (the causal prefix) and a teacher context $c_k^T = c_k^S \oplus h_k$, where $h_k$ is the injected hindsight information. 

Concretely, \textbf{each step is scored as a self-contained generation unit}: its prefix is the history up to that step's boundary, and the hindsight $h_k$ is injected only at that boundary. The step is thus re-evaluated under the agent's native conditional distribution $\pi(\cdot \mid c_k^S)$, rather than as part of a monolithic trajectory. 

While hindsight theoretically spans multiple forms—from a weak binary success flag to an expensive oracle demonstration—we exclusively adopt \textit{peer-trajectory hindsight}. When a \grpo{} group contains both successes and failures, we condition the teacher context of a failed trajectory on the first successful peer from that exact same group. When a group contains no successful peer, $h_k$ degrades to a final-correctness-only cue (a binary success/failure flag), so the teacher reduces to the standard outcome-conditioned setting. This critical design choice provides a dense, informative cue without incurring the distribution shift or annotation cost of external oracle demonstrations.

\subsection{Quantifying Step-Level Credit}
We quantify the localized ``regret'' of an action by measuring how much the hindsight information alters the likelihood of the generated step. For each token $z_{k,j}$ within step $k$, we compute the log-probability gap between the hindsight-aware teacher $\pi_T$ and the student $\pi_S$:
\begin{equation}
\begin{split}
\Delta_{k,j} =& \log \pi_T(z_{k,j} \mid c_k^T, z_{k,<j}) \\
&- \log \pi_S(z_{k,j} \mid c_k^S, z_{k,<j}).
\end{split}
\end{equation}
Because $\pi_S$ and $\pi_T$ differ only in their prefix, $\Delta_{k,j}$ is measured under the agent's native conditional distribution $\pi_S(z_{k,j} \mid c_k^S, z_{k,<j})$ --- a per-step, per-token quantity, rather than a single global score over the whole rollout. 

This $\Delta$ isolates critical decision nodes: tokens where the teacher strongly diverges from the student. To prevent moving-target instability during continuous RL, we instantiate the teacher $\pi_T$ as a \texttt{stale\_ref\_policy} (the policy parameters from a fixed number of steps ago, e.g., 10 steps), ensuring the log-probability gap reflects a stable reference distribution.

\subsection{Credit-Aware Advantage Shaping}
The log-probability gap $\Delta$ must be translated into an optimization signal that modulates the \grpo{} advantage without destroying its core mathematical properties. Let $A_{\ell}$ be the base token-level advantage. We first construct a raw multiplicative weight and then project it onto a symmetric local trust region controlled by $\alpha_{clip}$:
\begin{equation}
\begin{split}
w_{\ell}^{\mathrm{raw}} &= 2 \cdot \sigma(\mathrm{sign}(A_{\ell}) \cdot \Delta_{\ell}), \\
w_{\ell} &= \Pi_{[1-\alpha_{clip},\, 1+\alpha_{clip}]}\!\left(w_{\ell}^{\mathrm{raw}}\right),
\end{split}
\end{equation}
The final shaped advantage mixes the base RL signal with the hindsight weight via a decay parameter $\lambda_{mix}$:
\begin{equation}
\tilde{A}_{\ell} = (1-\lambda_{mix}) A_{\ell} + \lambda_{mix} (w_{\ell} A_{\ell}).
\end{equation}
This formulation strictly preserves the sign of the original RL advantage. It only redistributes the \textit{magnitude} of the optimization mass—amplifying updates on tokens where the teacher agrees, and dampening updates where the teacher diverges.

\subsection{Causal Equivalence via Step Normalization}
Finally, because steps naturally vary in token length, a purely token-wise advantage shaping would allow a long, verbose reasoning step to mathematically dominate a short, concise action step, simply by accumulating more $\Delta$ mass. We counter this by introducing an \texttt{equal\_step\_mean\_abs} constraint. The shaping weights within each extracted step are rescaled such that the mean absolute modification budget is uniform across all steps in the trajectory. The philosophical rationale is explicit: in the absence of dense sub-step rewards, the most principled prior is that each reasoning step serves as an equally critical causal link to the final outcome. The $\Delta$ signal is only used to distribute this fixed budget \textit{internally} among the tokens of a given step, preventing verbosity from hijacking the credit assignment.

\section{Experiments}

\begin{table*}[t]
\centering
\small
\resizebox{\textwidth}{!}{
\begin{tabular}{lccccccc|cccccccc}
\toprule
Method & Pick & Look & Clean & Heat & Cool & Pick2 & Avg & NQ & Triv & Pop & Hotp & 2Wk & MuS & Bam & Avg \\
\midrule
\multicolumn{16}{l}{\textit{Qwen3-1.7B}} \\
Vanilla & 25.0 & 22.2 & 3.1 & 0.0 & 21.4 & 4.2 & 12.5 & 29.4 & 46.9 & 37.0 & 23.5 & 19.6 & 6.4 & 10.5 & 24.8 \\
Skill-Prompt* & 10.3 & \second{50.0} & 16.1 & 0.0 & 5.0 & 9.4 & 14.1 & 29.4 & 46.5 & 36.2 & 22.9 & 20.8 & 4.3 & 10.1 & 24.3 \\
OPSD & 26.3 & 33.3 & 9.1 & 0.0 & 4.5 & 5.3 & 14.1 & 4.2 & 8.3 & 4.6 & 6.6 & 15.3 & 0.7 & 1.2 & 5.8 \\
GRPO & \second{71.1} & 41.7 & 36.4 & 40.0 & 31.8 & 31.6 & 42.1 & 40.0 & \second{58.9} & 43.5 & 35.4 & 30.3 & 12.0 & 65.7 & 40.8 \\
Skill-GRPO & 27.6 & \best{54.5} & 22.7 & 27.3 & 0.0 & 19.2 & 21.1 & 39.2 & 58.6 & 43.9 & 35.2 & 28.2 & 11.5 & \second{66.1} & 40.4 \\
Skill-GRPO* & 31.4 & 42.9 & 51.9 & 8.3 & 11.5 & 7.1 & 28.1 & 38.0 & 58.4 & 43.9 & 36.3 & 29.0 & 12.5 & \best{66.9} & 40.7 \\
GRPO+OPSD & 38.2 & \second{50.0} & 30.8 & 28.6 & 30.0 & 21.1 & 32.0 & \best{40.7} & \second{58.9} & 45.0 & \second{37.0} & \second{34.6} & \best{13.3} & 65.7 & \best{42.2} \\
Skill-SD & 52.9 & 37.5 & \second{69.2} & \second{42.9} & \best{60.0} & \second{36.8} & 52.3 & 39.1 & 57.5 & \second{45.4} & 34.8 & 34.1 & 10.7 & 64.1 & 40.8 \\
RLSD & 50.0 & 37.5 & 61.5 & 19.0 & 50.0 & 21.1 & 42.2 & 38.6 & 57.3 & 43.0 & 34.5 & 34.1 & 11.5 & 65.3 & 40.6 \\
SDAR & \best{73.5} & 25.0 & \best{76.9} & 33.3 & 40.0 & \second{36.8} & \second{53.9} & 39.7 & \second{58.9} & 45.3 & 35.9 & \best{35.5} & \second{12.6} & 65.3 & 41.9 \\
\textbf{StepOPSD ($\lambda_{mix}=0.05$)} & 64.7 & 44.4 & 56.5 & \best{60.9} & 42.1 & \best{55.0} & \best{56.3} & 40.5 & 58.6 & \best{45.6} & 34.9 & 29.8 & 10.8 & 64.5 & 41.4 \\
\textbf{StepOPSD ($\lambda_{mix}=0.2$)} & 58.8 & 44.4 & 52.2 & 26.1 & \second{52.6} & 20.0 & 43.8 & \second{40.6} & \best{59.4} & 44.4 & \best{37.1} & 32.0 & 11.6 & 64.1 & \second{42.0} \\
\midrule
\multicolumn{16}{l}{\textit{Qwen2.5-3B-Instruct}} \\
Vanilla & 44.4 & 11.1 & 6.2 & 15.4 & 28.6 & 12.5 & 21.9 & 24.6 & 48.1 & 31.0 & 26.3 & 25.3 & 7.2 & 59.7 & 31.7 \\
Skill-Prompt* & 51.7 & 66.7 & 48.4 & 0.0 & 4.3 & 10.0 & 28.9 & 23.7 & 46.2 & 30.6 & 24.4 & 22.1 & 7.5 & 12.5 & 23.9 \\
OPSD & 48.8 & 41.7 & 16.7 & 0.0 & 15.8 & 16.7 & 23.1 & 0.1 & 0.1 & 0.1 & 0.0 & 0.0 & 0.0 & 0.0 & 0.0 \\
GRPO & 91.2 & 62.5 & \second{96.2} & 61.9 & 65.0 & 47.4 & 70.0 & 43.5 & 58.8 & 43.0 & 36.8 & 32.2 & 11.7 & \best{72.5} & 42.6 \\
Skill-GRPO & \second{98.9} & 71.4 & 58.8 & 70.6 & 40.7 & 29.2 & 60.2 & 44.3 & 59.6 & 44.3 & 39.0 & 36.1 & 14.5 & 66.4 & 43.5 \\
Skill-GRPO* & 94.3 & 57.1 & \best{100.0} & 66.7 & 73.1 & 57.1 & 80.5 & 44.3 & 59.6 & 44.3 & 39.0 & 36.1 & 14.5 & 14.9 & 36.1 \\
GRPO+OPSD & \best{100.0} & \best{82.4} & 85.7 & \second{75.0} & 70.0 & 60.0 & 81.2 & \second{44.9} & 61.2 & 45.2 & \best{40.4} & 38.5 & \second{16.0} & 66.1 & \second{44.6} \\
Skill-SD & 88.2 & 50.0 & \second{96.2} & 52.4 & 65.0 & 57.9 & 73.4 & 44.4 & 60.4 & 44.0 & \second{39.5} & \best{40.4} & 15.4 & 64.9 & 44.1 \\
RLSD & 87.9 & \second{75.0} & 90.9 & \second{75.0} & 73.1 & 68.4 & 79.7 & 41.5 & 58.6 & 42.3 & \best{40.4} & \second{40.2} & \best{16.8} & \second{66.9} & 43.8 \\
SDAR & 97.1 & 62.5 & \best{100.0} & 61.9 & \second{75.0} & 84.2 & \best{84.4} & 44.8 & 58.1 & 44.3 & 38.6 & 36.2 & 15.7 & 66.1 & 43.4 \\
\textbf{StepOPSD ($\lambda_{mix}=0.05$)} & 82.4 & 66.7 & 82.6 & 52.2 & 73.7 & 75.0 & 73.4 & 43.6 & 61.2 & 43.8 & 39.2 & 38.1 & 15.8 & 64.5 & 43.7 \\
\textbf{StepOPSD ($\lambda_{mix}=0.05$, $\alpha_{clip}=0.05$)} & -- & -- & -- & -- & -- & -- & -- & \best{45.0} & \best{61.6} & \second{46.2} & \second{39.5} & 39.5 & 14.4 & 65.3 & \best{45.7} \\
\textbf{StepOPSD ($\lambda_{mix}=0.2$)} & 45.0 & 55.6 & 87.0 & 65.2 & 57.9 & \second{85.3} & 69.5 & 44.4 & \second{61.4} & \best{46.8} & \best{40.4} & 38.1 & 14.1 & 65.0 & 44.3 \\
\textbf{StepOPSD ($\lambda_{mix}=0.2$, $\alpha_{clip}=0.05$)} & 97.1 & 66.7 & 87.0 & \best{79.1} & \best{78.9} & \best{95.0} & \second{83.6} & -- & -- & -- & -- & -- & -- & -- & -- \\
\bottomrule
\end{tabular}
}
\caption{Performance on ALFWorld and Search-QA across the 1.7B and 3B model scales. We report success rate (\%) on ALFWorld and accuracy (\%) on Search-QA. We additionally include two informative 3B StepOPSD variants with altered $\alpha_{clip}$; \texttt{--} indicates the corresponding task was not run for that follow-up.}
\label{tab:main-result}
\end{table*}

\subsection{Benchmarks}
To comprehensively evaluate our method, we adopt two diverse agentic environments: ALFWorld \citep{ALFWorld20} and Search-QA \citep{jin2025search}.
\textbf{ALFWorld} is a text-based embodied AI benchmark featuring 3,827 household tasks spanning six categories: Pick and Place (Pick), Look at Obj in Light (Look), Pick Clean then Place in Recep (Clean), Pick Heat then Place in Recep (Heat), Pick Cool then Place in Recep (Cool), and Pick Two Obj and Place (Pick2).
\textbf{Search-QA} aggregates multiple search-augmented question-answering datasets. It includes single-hop datasets—NQ \citep{kwiatkowski-etal-2019-natural}, TriviaQA \citep{joshi-etal-2017-triviaqa}, and PopQA \citep{mallen-etal-2023-trust}—as well as multi-hop reasoning tasks—HotpotQA \citep{yang-etal-2018-hotpotqa}, 2Wiki \citep{ho-etal-2020-constructing}, MuSiQue \citep{trivedi-etal-2022-musique}, and Bamboogle \citep{press-etal-2023-measuring}.

\subsection{Implementation Details}
We optimize \texttt{Qwen3-1.7B} and \texttt{Qwen2.5-3B-Instruct} with \method{} for up to 150 steps on 4 A800 GPUs. For ALFWorld, we use standard splits, sampling 16 tasks per batch with 8 rollouts per prompt, a maximum prompt length of 2,048 tokens, and a response cap of 512 tokens. For Search-QA, following \searchr{} \citep{jin2025search}, we use E5 as the dense retriever. The training split includes NQ and HotpotQA, while the remaining datasets are reserved for out-of-domain evaluation. Each batch contains 128 tasks with a maximum prompt length of 4,096 tokens.

Our implementation uses several configurations to stabilize credit shaping. We use a stale reference policy (\texttt{stale\_ref\_policy}, refreshed every 10 steps) as the teacher to avoid moving-target instability. Distillation targets isolate causal boundaries (\texttt{action\_only} for ALFWorld and \texttt{clean\_step\_no\_observation} for Search-QA) rather than full trajectories, and hindsight contexts are enriched with the first correct peer trajectory from the same rollout group. We modulate the RL advantage with a sigmoid weight function, using a default instantiation with $\alpha_{clip}=0.2$ and an initial $\lambda_{mix}=0.2$ that linearly decays over 50 steps; later ablations show that tighter $\alpha_{clip}$ is often more stabilizing, whereas the best $\lambda_{mix}$ remains environment-dependent. After the decay reaches zero, the stale teacher is still refreshed and StepOPSD statistics are still logged, but they no longer affect the actor update. We therefore read post-step-50 StepOPSD curves as diagnostics of teacher-student drift under a continuing stale teacher, rather than evidence that shaping remains equally active late in training. We also apply step-wise advantage normalization (\texttt{equal\_step\_mean\_abs}) to prevent long steps from dominating the optimization landscape. We set the GRPO KL penalty to $0.01$ and the invalid action penalty coefficient to $0.1$.

\subsection{Baselines}
To ensure a rigorously fair comparison under identical infrastructure, we benchmark \method{} against three groups of approaches across our base models:
\textbf{(1) Training-free methods:} \textit{Skill-Prompt} retrieves task-relevant skills via keyword matching (KM) and prepends them to the input prompt at inference time.
\textbf{(2) Post-training methods:} This includes standard \grpo{} \citep{schulman2017ppo}, vanilla OPSD, and \textit{Skill-GRPO}. \textit{Skill-GRPO} augments GRPO by retrieving skills and injecting them into the training prompt; at test time it can run with (\textit{Skill-GRPO*}) or without retrieved skills.
\textbf{(3) Hybrid methods:} These fuse RL with privileged knowledge distillation. We compare against naive \textit{GRPO+OPSD}, \textit{Skill-SD} \citep{wang2026skillsdskillconditionedselfdistillationmultiturn}, \textit{RLSD} \citep{yang2026rlsd}, and the current state-of-the-art \textit{SDAR} \citep{lu2026sdar}. \textit{GRPO+OPSD} simply adds the OPSD distillation loss as an auxiliary objective on top of GRPO training.

\subsection{Main Results}
Table~\ref{tab:main-result} reports the main comparison on ALFWorld and Search-QA across \texttt{Qwen3-1.7B} and \texttt{Qwen2.5-3B-Instruct}. The central pattern is not merely that embodied tasks prefer one setting while retrieval tasks prefer another. Rather, the benefit of \method{} varies systematically with the \emph{specific failure mode} of each subset: the larger the mismatch between a sparse trajectory-level reward and the true local decision that determines success, the larger the value of step-aware credit shaping.

\textbf{Fine-grained patterns on ALFWorld.}
Among the six embodied subsets, \texttt{Pick} and \texttt{Look} are relatively short-horizon and less credit-ambiguous: once the agent correctly grounds the target object or light source, the remaining path is direct. Standard RL or skill-augmented baselines therefore remain competitive there, e.g., SDAR reaches $73.5\%$ on \texttt{Pick} at 1.7B and GRPO+OPSD reaches $82.4\%$ on \texttt{Look} at 3B. By contrast, \texttt{Heat}, \texttt{Cool}, and \texttt{Clean} involve latent state transitions that are invisible in the final reward decomposition: an agent may correctly find, carry, and place the object, yet still fail because one intermediate action did not actually change its state. Here, trajectory-level RL is especially wasteful because it punishes the whole rollout for what is often a single local mistake. Step-aware shaping is most effective in exactly this regime. The clearest case is \texttt{Heat} at 1.7B, where \method{} with $\lambda_{mix}=0.05$ reaches $60.9\%$, well above SDAR ($33.3\%$) and GRPO ($40.0\%$), indicating that the post-hoc teacher signal corrects the decisive appliance-interaction step rather than globally suppressing exploration. At 3B, the reduced-$\alpha_{clip}$ run ($\lambda_{mix}=0.2$, $\alpha_{clip}=0.05$) strengthens this picture, reaching the best \texttt{Heat} score ($79.1\%$) and the best \texttt{Cool} score ($78.9\%$). \texttt{Clean} shows a milder version of the same pattern, suggesting that its key subgoal is easier for competitors to verbalize once discovered. \texttt{Pick2} exposes a different bottleneck: not hidden state transition, but cross-subgoal bookkeeping. After partially solving object 1, the agent must preserve progress while initiating and completing object 2. On 1.7B, the gentler $\lambda_{mix}=0.05$ achieves the best score ($55.0\%$), showing that weaker shaping better protects fragile exploration. On 3B, the reduced-$\alpha_{clip}$ run reaches the best \texttt{Pick2} score ($95.0\%$), suggesting that once the base policy is more competent, strong global guidance remains effective if local over-correction is tightly bounded. Together, these patterns explain why ALFWorld benefits strongly from tighter local clipping at 3B, even though the low-$\lambda_{mix}$ rows still reveal what happens when that control is absent.

\textbf{Fine-grained patterns on Search-QA.}
The Search-QA results show an equally sharp but different decomposition. \texttt{NQ} is relatively short-chain and entity-centric, so the main bottleneck is often whether the agent retrieves the answer once, not whether it sustains a long tool-use chain. StepOPSD is therefore competitive but not always dominant there under its default setting. The reduced-$\alpha_{clip}$ 3B follow-up sharpens this picture: with $\lambda_{mix}=0.05$ and $\alpha_{clip}=0.05$, \method{} reaches the best \texttt{NQ} score ($45.0\%$), suggesting that this subset benefits from preserving search freedom once local over-correction is suppressed. \texttt{TriviaQA}, by contrast, is highly sensitive to lexical formulation, aliases, and evidence phrasing; a slightly better search query often flips the entire example. This is exactly where step-level shaping should matter most, and \method{} indeed reaches the best \texttt{TriviaQA} scores at both scales, including $61.6\%$ for the reduced-$\alpha_{clip}$ 3B run. \texttt{PopQA} follows a similar logic: many questions are factual and seemingly shallow, but they are long-tail and brittle to query wording, so better credit assignment at the precise retrieval step again helps \method{} reach the best 1.7B score ($45.6\%$ with $\lambda_{mix}=0.05$) and remain near-best at 3B, with $46.8\%$ for $\lambda_{mix}=0.2$ and $46.2\%$ for the reduced-$\alpha_{clip}$ $\lambda_{mix}=0.05$ follow-up. \texttt{HotpotQA} makes the mechanism even clearer. Because the second hop depends on an intermediate bridge entity, one malformed first query can poison the rest of the rollout. A denser teacher signal is therefore disproportionately valuable, enabling \method{} with $\lambda_{mix}=0.2$ to achieve the best 1.7B score ($37.1\%$) and tie the best 3B score ($40.4\%$). By contrast, \texttt{2Wiki} and especially \texttt{MuSiQue} remain harder for our current variant. Their failures are not always caused by one bad search action; they often arise from multi-hop evidence composition under stronger distractors and deeper dependency chains. In those cases, local step correction still helps, but its advantage is diluted when the dominant error lies in global evidence aggregation rather than in a single retrieval decision. \texttt{Bamboogle} is also revealing: once the model issues one sufficiently targeted search, strong RL baselines already perform well, so the room for improvement shrinks, and even our reduced-$\alpha_{clip}$ 3B follow-up remains below GRPO's $72.5\%$.

\textbf{A consistent task-dependent law across scales.}
Taken together, Table~\ref{tab:main-result} reveals a sharper empirical law than a simple weak-vs.-strong shaping dichotomy. Embodied subsets with rigid physical dynamics consistently benefit from tighter local control: at 3B, keeping $\lambda_{mix}=0.2$ but tightening $\alpha_{clip}$ to $0.05$ raises the finalized ALFWorld average to $83.6\%$ and produces the best \texttt{Heat}, \texttt{Cool}, and \texttt{Pick2} scores. Retrieval subsets remain more sensitive to the interaction between global mixing and local clipping. Under the default $\alpha_{clip}$, $\lambda_{mix}=0.2$ yields the stronger 3B Search-QA average ($44.3\%$ vs.\ $43.7\%$), but once $\alpha_{clip}$ is tightened to $0.05$, the lighter-$\lambda_{mix}$ variant rises to the best 3B Search average ($45.7\%$). The transferable pattern is therefore not that one global $\lambda_{mix}$ universally wins, but that tighter local clipping through smaller $\alpha_{clip}$ is broadly beneficial, while the optimal global $\lambda_{mix}$ remains task-dependent. This is the central empirical message of \method{}: step-aware distillation is most effective when trajectory-level rewards are maximally misaligned with the local causal action that determines downstream success.

\subsection{Training Diagnostics: Pre- and Post-Step 50 Phase Transition}
\begin{figure}[t]
    \centering
    \includegraphics[width=\linewidth]{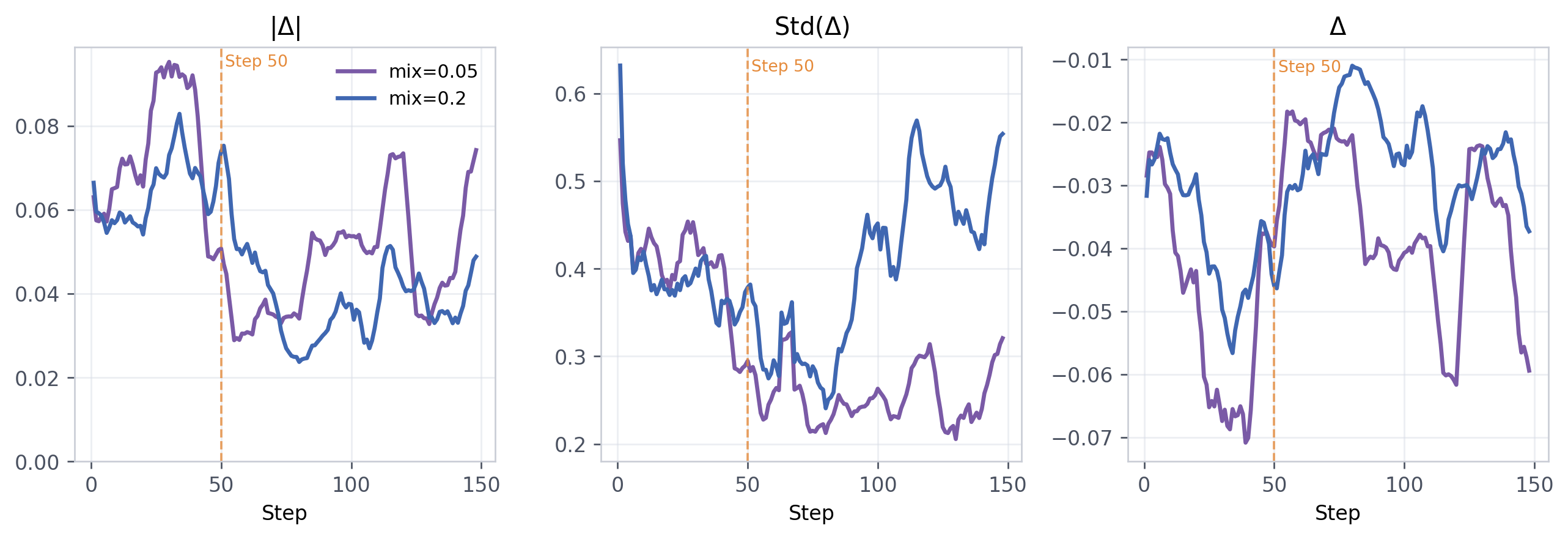}
    \caption{StepOPSD dynamics on ALFWorld for \texttt{Qwen3-1.7B}. Around step 50, heavy shaping ($\lambda_{mix}=0.2$) induces a variance explosion in the teacher-student gap.}
    \label{fig:alfworld-dynamics}
\end{figure}

End-task accuracy alone does not reveal how \method{} works. We therefore track the standard deviation of the teacher-student gap, $\text{Std}(\Delta)$, over training. The key pattern is a phase transition around step 50, when the base RL policy begins to mature. This turning point must be read with the implementation detail above: by step 50, the explicit StepOPSD mixing coefficient has decayed to zero, so later curves mainly reflect student drift relative to a stale teacher still refreshed every 10 steps, rather than a still-active shaping term.

In Search-QA, sparse rewards leave the student more dependent on the teacher. Before step 50, all models show similar variance ($\text{Std}(\Delta)\approx 0.40$--$0.49$). After step 50, the weakly shaped student ($\lambda_{mix}=0.05$) jumps to $0.61$, while stronger shaping ($\lambda_{mix}=0.2$) remains stable at about $0.44$. Without enough guidance, exploration drifts.

In ALFWorld, the pattern reverses. Once the student learns basic controls, heavy shaping ($\lambda_{mix}=0.2$) creates a ``tug-of-war'' between hindsight paths and RL exploration, pushing $\text{Std}(\Delta)$ from $0.49$ to $0.61$ after step 50. By contrast, milder shaping, via lower $\lambda_{mix}$ or tighter clipping, lets the policy optimize reward without over-constraint and drives $\text{Std}(\Delta)$ down (e.g., from $0.46$ to $0.26$).

This mirrored phase transition across environments supports our theoretical variance bound (Theorem~\ref{thm:variance}): \method{} reduces gradient variance only when distillation strength is calibrated to the density of the environment reward.

\subsection{Ablation Studies: Weight Clipping as a Cross-Task Stabilizer}
\begin{figure}[t]
    \centering
    \includegraphics[width=\linewidth]{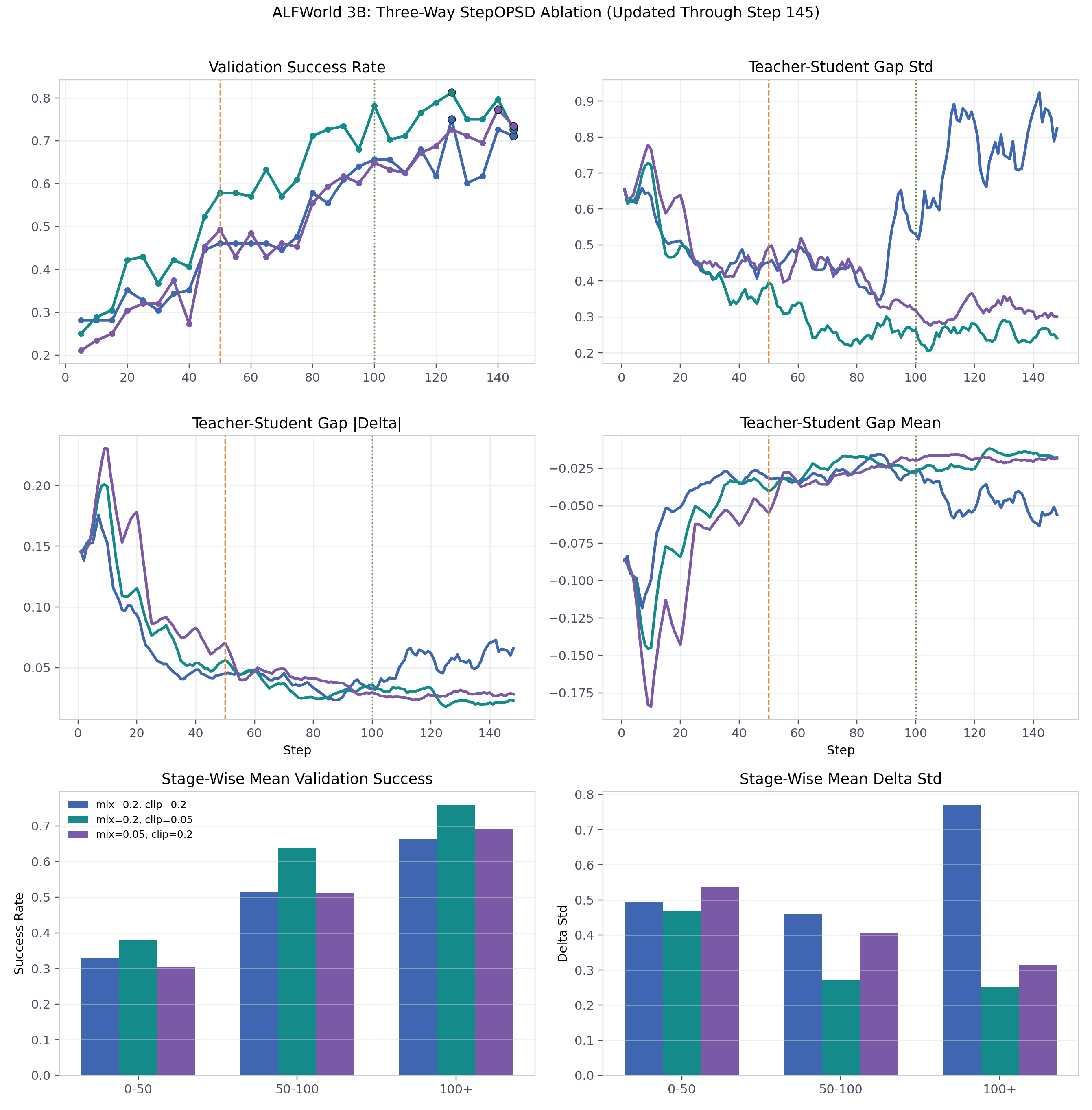}
    \caption{Three-way StepOPSD ablation on ALFWorld with \texttt{Qwen2.5-3B-Instruct}. We compare the heavy baseline ($\lambda_{mix}=0.2$, $\alpha_{clip}=0.2$), the lower-$\lambda_{mix}$ variant ($\lambda_{mix}=0.05$, $\alpha_{clip}=0.2$), and the reduced-$\alpha_{clip}$ variant ($\lambda_{mix}=0.2$, $\alpha_{clip}=0.05$). The validation panel marks peak and final checkpoints; the others show training dynamics and stage-wise summaries over steps 0--50, 50--100, and 100+.}
    \label{fig:alfworld-ablation}
\end{figure}

We use ALFWorld at 3B because it gives the cleanest three-way comparison among StepOPSD control knobs: a heavy baseline $(\lambda_{mix}=0.2, \alpha_{clip}=0.2)$, a lower-$\lambda_{mix}$ variant $(\lambda_{mix}=0.05, \alpha_{clip}=0.2)$, and a reduced-$\alpha_{clip}$ variant $(\lambda_{mix}=0.2, \alpha_{clip}=0.05)$. Figure~\ref{fig:alfworld-ablation} shows that they stay close early but separate clearly after step 50. The reduced-$\alpha_{clip}$ run yields the best mature-phase validation and the highest logged peak.

The dynamics show that the main issue is not shaping itself, but unbounded local correction during the active phase. After step 50, the heavy baseline reaches a late-stage gap variance of $\text{Std}(\Delta)=0.770$ over steps 101--150, whereas the reduced-$\alpha_{clip}$ run stays at $0.252$. Because $\lambda_{mix}$ has already decayed to zero, we read these later curves as drift against stale-teacher snapshots rather than continued active shaping. The reduced-$\alpha_{clip}$ run matters because it leaves behind a more stable mature policy.

\begin{figure}[t]
    \centering
    \includegraphics[width=\linewidth]{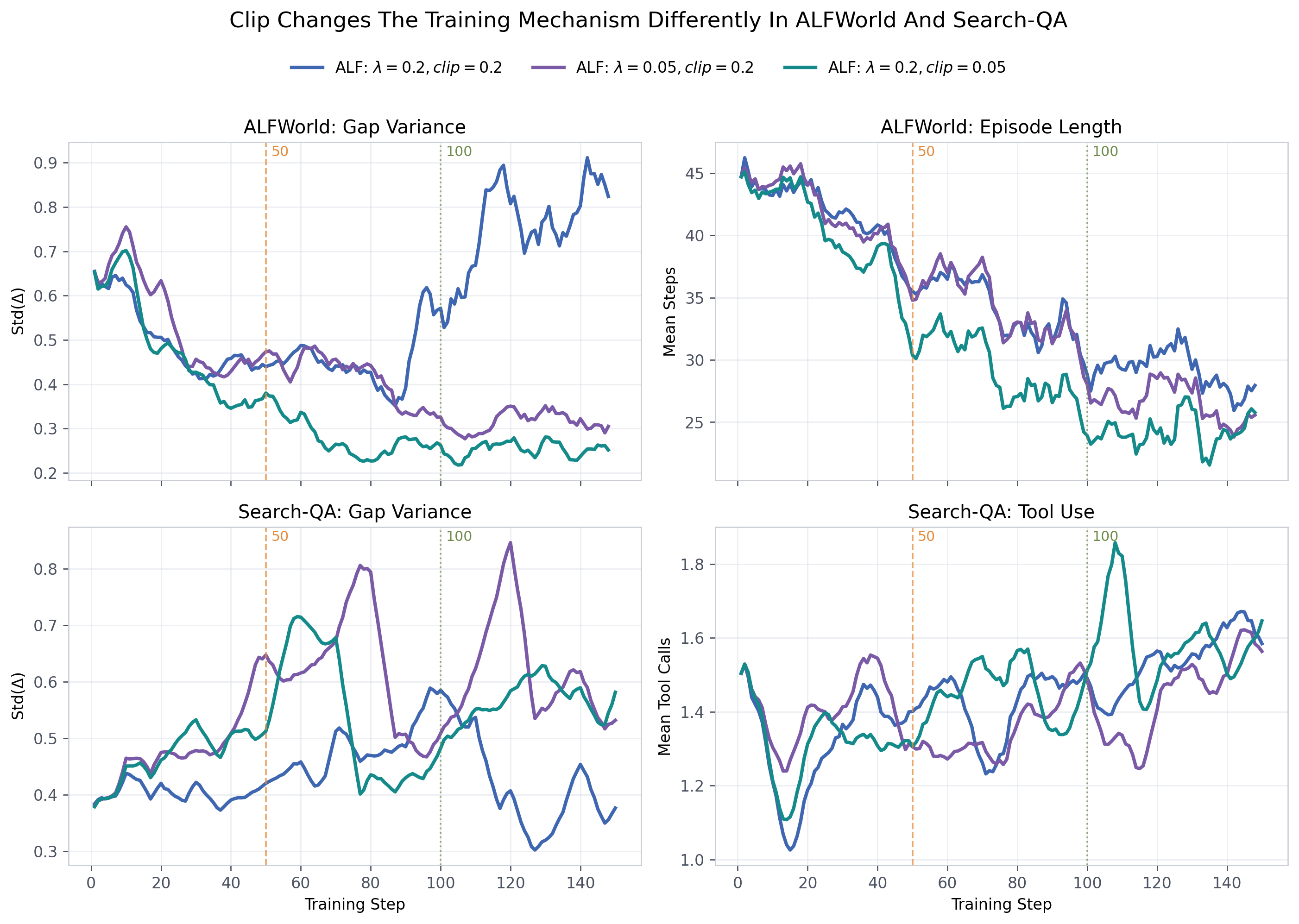}
    \caption{Mechanistic view of weight clipping at 3B. In ALFWorld, tightening $\alpha_{clip}$ under fixed $\lambda_{mix}=0.2$ prevents the late-stage explosion of $\mathrm{Std}(\Delta)$ and shortens embodied trajectories. In Search-QA, tightening $\alpha_{clip}$ under fixed $\lambda_{mix}=0.05$ lowers gap variance relative to the loose-clip counterpart and increases tool use. The phase markers at steps 50 and 100 show that clipping matters most after the policy enters the mature regime.}
    \label{fig:clip-mechanism-3b}
\end{figure}

Figure~\ref{fig:clip-mechanism-3b} shows the same mechanism across tasks. In ALFWorld, clipping turns strong shaping into bounded strong shaping: the reduced-$\alpha_{clip}$ run keeps the lowest late-stage variance and shortens the mature policy from $29.2$ to $24.3$ actions, without changing the near-perfect valid-action ratio. In Search-QA, the effect is different but consistent. Under weak global mixing, tightening $\alpha_{clip}$ lowers late-stage gap variance, raises tool use from $1.45$ to $1.59$, and shortens responses from $98.9$ to $86.4$ tokens. The gain therefore comes less from formatting correctness than from reallocating behavior toward actual search.

Taken together, the 3B ablations reveal a more specific trade-off than ``more vs.\ less distillation.'' The global mixing coefficient $\lambda_{mix}$ controls how strongly the teacher anchors the update, whereas the clipping coefficient $\alpha_{clip}$ controls how far any local token correction can deviate from the RL signal. Tight clipping helps in both environments because it prevents a small subset of tokens from dominating the post-rollout correction; what changes across tasks is whether this bounded correction stabilizes embodied action plans or encourages more tool-centric retrieval behavior.

\FloatBarrier

\section{Related Work}
\subsection{Agentic RL and Credit Assignment}
Agentic language models act through temporally extended trajectories rather than single-pass responses. In formulations such as ReAct, these trajectories are naturally structured as interleaved reasoning, actions, and observations \citep{yao2023react,schick2023toolformer}. Recent RL work has scaled this paradigm to search agents and other long-horizon settings \citep{jin2025search,jin2025empirical}, but also makes the credit-assignment problem more severe: rewards are sparse and trajectory-level, while success often depends on a few local decisions. Our work targets this mismatch directly.

\subsection{On-Policy Distillation and Self-Distillation}
Policy distillation transfers behavior through distribution matching \citep{rusu2015policy}, while more recent on-policy variants apply supervision on student-generated rollouts. In particular, OPSD-style methods use privileged or hindsight contexts to define a stronger teacher on the student's own trajectories \citep{zhao2026opsd,pidistill2026}. For multi-turn agents, Skill-SD, RLSD, and SDAR all study how dense self-distillation can complement RL \citep{wang2026skillsdskillconditionedselfdistillationmultiturn,yang2026rlsd,lu2026sdar}. StepOPSD belongs to this line, but localizes supervision to extracted action-centered step spans instead of distilling an entire trajectory.

\subsection{Reward Shaping and Advantage Reweighting}
Our method is also related to reward shaping and policy-preserving auxiliary guidance. Classical work showed that shaping signals can accelerate learning without changing the original optimum when designed appropriately \citep{ng1999rewardshaping,wiewiora2003potential}. Modern policy optimization likewise uses comparative signals to modulate updates without replacing the base objective \citep{schulman2017ppo,rafailov2024dpo,azar2024ipo}. StepOPSD follows this principle: RL remains the backbone, while teacher-derived token gaps modulate post-rollout advantages.

\subsection{Step-Structured Supervision for Agents}
Multi-turn agent trajectories are heterogeneous: observations, tool calls, reasoning, and answers do not play the same causal role. Supervising the full sequence uniformly can therefore blur the boundary between context and action. Recent agent distillation methods introduce richer training-time contexts, such as privileged information, skill summaries, or token gating \citep{pidistill2026,wang2026skillsdskillconditionedselfdistillationmultiturn,lu2026sdar}. Our contribution is to make the supervision unit explicit by extracting action-centered step segments and injecting their teacher-student comparison directly into advantage shaping.

\section{Conclusion}
This paper presents \method{}, a step-aware extension of Online Preference Distillation for multi-turn agent reinforcement learning against Online Policy Distilation (OPD). Instead of distilling an entire trajectory uniformly, \method{} isolates the action-centered spans that causally determine agent behavior and feeds hindsight teacher signals back into token-level advantage shaping. The method is faithful to the underlying \grpo{} training stack, modular in implementation, and naturally aligned with the heterogeneous structure of agent trajectories.

Empirically, our results show that the main value of StepOPSD is not a uniform gain on every benchmark average, but a more precise correction of the subsets most sensitive to local decision errors. Across ALFWorld and Search-QA, \method{} consistently achieves multiple best and second-best results, while revealing a sharper two-knob law: tighter local clipping through smaller $\alpha_{clip}$ is broadly beneficial, whereas the optimal global $\lambda_{mix}$ remains task-dependent. This supports our central claim that step-aware distillation is most effective when sparse trajectory-level rewards are weakly aligned with the local causal action that determines downstream success.

\section*{Limitations}
While \method{} demonstrates strong potential for step-aware credit assignment, we acknowledge two main limitations in the current study.

First, our ablation studies indicate that the optimal mixing strength ($\lambda_{mix}$) varies across different environment types---requiring gentler shaping for action-centric tasks and stronger shaping for knowledge-intensive retrieval tasks. This introduces an additional hyperparameter tuning step when deploying the method to entirely new domains.

Second, the current framework focuses strictly on post-rollout advantage shaping. While this ensures stable integration with the underlying RL backbone, it does not dynamically intervene during the generation phase. Exploring how to use the step-level distillation signal to guide real-time decoding remains an important direction for future work.

\bibliography{custom}

\appendix

\section{Analytical Discussion}
\label{sec:theory}
In this appendix, we provide a formal analysis of \method{}'s properties, specifically focusing on how it modulates the reinforcement learning gradients without shifting the optimal policy, and how it mitigates the credit assignment problem in long-horizon agent trajectories.

Let the standard policy gradient for a token $y_t$ be $g_{t} = A_t \nabla_\theta \log \pi_\theta(y_t | s_t)$. \method{} modifies this advantage to $\tilde{A}_t = \Psi_t A_t$, where $\Psi_t = 1 - \lambda_{mix} + \lambda_{mix} w_t$ is the shaping multiplier, $w_t = 2\sigma(\text{sign}(A_t) \cdot \Delta_t) \in (0, 2)$ is the weight function, and $\Delta_t = \log \pi_{ref}(y_t | s^+_t) - \log \pi_\theta(y_t | s_t)$ is the teacher-student gap.

\subsection{Property 1: Sign Preservation and Directional Consistency}
A critical requirement for any advantage shaping method is that it must not introduce pathological local optima that deviate from the true environment reward.

\begin{proposition}[Sign Preservation]
\label{prop:sign}
For any $\lambda_{mix} \in [0, 1)$, the shaped advantage $\tilde{A}_t$ strictly preserves the sign of the original RL advantage $A_t$, i.e., $\text{sign}(\tilde{A}_t) = \text{sign}(A_t)$.
\end{proposition}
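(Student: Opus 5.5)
The plan is to factor the shaped advantage and show that the multiplier $\Psi_t$ is strictly positive. By the definition in Section~\ref{sec:theory}, $\tilde{A}_t = \Psi_t A_t$ with $\Psi_t = 1 - \lambda_{mix} + \lambda_{mix} w_t$. If $\Psi_t > 0$ for every token, then $\text{sign}(\tilde{A}_t) = \text{sign}(\Psi_t)\,\text{sign}(A_t) = \text{sign}(A_t)$, and this includes the degenerate case $A_t = 0$, where both sides vanish. So the whole argument reduces to a lower bound on $\Psi_t$.

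The first step is to bound the weight. Because the logistic function maps $\mathbb{R}$ into $(0,1)$, we have $w_t = 2\sigma(\text{sign}(A_t)\Delta_t) \in (0,2)$. This holds for any finite gap $\Delta_t$, and it also covers the convention $\text{sign}(0)=0$, which gives $w_t = 1$. For the clipped weight actually used in the method, $\Pi_{[1-\alpha_{clip},1+\alpha_{clip}]}$, I would simply note that the clipped value is also nonnegative whenever $\alpha_{clip} \le 1$. In that case it lies in $[1-\alpha_{clip}, 1+\alpha_{clip}] \subseteq [0,2]$. So in either version I only need $w_t \ge 0$.

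The second step is the positivity of $\Psi_t$. Since $\Psi_t$ is affine in $\lambda_{mix}$, it is a convex combination of $1$ and $w_t$: $\Psi_t = (1-\lambda_{mix})\cdot 1 + \lambda_{mix} w_t$. For $\lambda_{mix} \in [0,1)$ the first term is strictly positive and the second is nonnegative, so $\Psi_t \ge 1-\lambda_{mix} > 0$. This also gives the bound $\Psi_t \in (1-\lambda_{mix},\, 1+\lambda_{mix})$, which I would record for later use, for example in the variance result.

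The main point to be careful about is why $\lambda_{mix}=1$ is excluded. In that case $\Psi_t = w_t$, and $w_t$ can approach $0$ as $\Delta_t \to -\infty$ with $A_t>0$, or reach exactly $0$ under clipping with $\alpha_{clip}=1$. Sign preservation then weakens to non-strict form. I would remark on this, and also mention the step-normalization rescaling (\texttt{equal\_step\_mean\_abs}). That rescaling multiplies the per-step weights by positive factors, so it preserves nonnegativity provided it rescales the deviation $w_t-1$ without flipping the sign of $\Psi_t$. This is the part I expect to need an explicit assumption, namely that the normalization constant is kept small enough that $\Psi_t$ stays positive. Everything else is immediate.
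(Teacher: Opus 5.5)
Your argument is correct and matches the paper's: factor $\tilde{A}_t=\Psi_t A_t$, then use $w_t>0$ together with $1-\lambda_{mix}>0$ to get $\Psi_t\ge 1-\lambda_{mix}>0$. One side remark is wrong, though: $w_t=2\sigma(\mathrm{sign}(A_t)\Delta_t)$ is strictly positive for finite $\Delta_t$, and its projection onto $[1-\alpha_{clip},1+\alpha_{clip}]$ stays strictly positive for every $\alpha_{clip}\ge 0$ (it never equals exactly $0$ at $\alpha_{clip}=1$), so strict sign preservation still holds at $\lambda_{mix}=1$ and only the uniform lower bound $1-\lambda_{mix}$ on $\Psi_t$ is lost.
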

\begin{proof}
By definition, the shaping multiplier is $\Psi_t = 1 - \lambda_{mix} + \lambda_{mix} w_t$. Since $w_t > 0$ and $\lambda_{mix} \in [0, 1)$, we are guaranteed that $1 - \lambda_{mix} > 0$, and thus $\Psi_t > 0$. Consequently, multiplying $A_t$ by a strictly positive scalar $\Psi_t$ preserves its sign: $\tilde{A}_t = 0 \iff A_t = 0$, and $\tilde{A}_t > 0 \iff A_t > 0$.
\end{proof}

\begin{theorem}[Directional Consistency]
\label{thm:consistency}
Let $g_{RL} = \mathbb{E}[A_t \nabla_\theta \log \pi_\theta]$ and $g_{StepOPSD} = \mathbb{E}[\tilde{A}_t \nabla_\theta \log \pi_\theta]$. Under the assumption that the shaping multiplier $\Psi_t$ is bounded and positive, the expected StepOPSD gradient maintains a non-negative cosine similarity with the exact RL gradient, acting as a valid descent direction for the original RL objective.
\end{theorem}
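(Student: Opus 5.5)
The plan is to write the shaped gradient as the RL gradient plus a bounded perturbation, then control the inner product $\langle g_{RL}, g_{StepOPSD}\rangle$ directly. By Proposition~\ref{prop:sign} and its proof, $\Psi_t > 0$. With the clipping of Section~2 in force, $w_t\in[1-\alpha_{clip},1+\alpha_{clip}]$. Writing $\Psi_t = 1 + \lambda_{mix}(w_t-1)$ then gives $\Psi_t \in [1-\lambda_{mix}\alpha_{clip},\, 1+\lambda_{mix}\alpha_{clip}]$. This is the concrete form of the hypothesis that $\Psi_t$ is bounded and positive. Setting $\varepsilon_t := \Psi_t - 1$, we have $|\varepsilon_t|\le \lambda_{mix}\alpha_{clip}$ and
\begin{equation*}
g_{StepOPSD} = g_{RL} + e, \qquad e := \mathbb{E}[\varepsilon_t A_t \nabla_\theta \log\pi_\theta].
\end{equation*}

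Step one is to bound the perturbation. By Jensen and the triangle inequality, $\|e\| \le \lambda_{mix}\alpha_{clip}\, G$, where $G := \mathbb{E}[|A_t|\,\|\nabla_\theta\log\pi_\theta\|]$. Step two is to expand the inner product:
\begin{equation*}
\langle g_{RL}, g_{StepOPSD}\rangle = \|g_{RL}\|^2 + \langle g_{RL}, e\rangle \ge \|g_{RL}\|\bigl(\|g_{RL}\| - \lambda_{mix}\alpha_{clip} G\bigr).
\end{equation*}
Hence the cosine similarity is non-negative, and in fact at least $1 - \lambda_{mix}\alpha_{clip}G/\|g_{RL}\|$ by the same Cauchy--Schwarz step, whenever
\begin{equation*}
\lambda_{mix}\alpha_{clip}\, G \le \|g_{RL}\|.
\end{equation*}
Since $g_{StepOPSD}$ then has non-negative inner product with the true gradient, a small step along it does not decrease the RL objective to first order. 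This is the ``valid ascent/descent direction'' conclusion.

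The main obstacle is that positivity and boundedness of $\Psi_t$ alone cannot suffice in general. A positive reweighting of per-sample vectors can rotate their mean by more than $90^\circ$ when the per-token terms $A_t\nabla\log\pi_\theta$ nearly cancel, i.e.\ when $\|g_{RL}\|\ll G$. I would therefore make this explicit and state the theorem under the quantitative condition above, a signal-to-dispersion requirement on the gradient. I would then remark on its interpretation. The condition holds automatically for $\lambda_{mix}\alpha_{clip}$ small enough relative to $\|g_{RL}\|/G$, which matches the empirical finding that small $\alpha_{clip}$ is stabilizing. It also holds whenever $\lambda_{mix}=0$, as after the decay schedule ends.

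As a complementary exact case, I would note when the condition is not needed. If $\varepsilon_t$ is uncorrelated with $A_t\nabla_\theta\log\pi_\theta$, for instance when it is constant within each step after the \texttt{equal\_step\_mean\_abs} normalization and independent across steps, then $e = \mathbb{E}[\varepsilon_t]\, g_{RL}$. In that case $g_{StepOPSD}$ is a positive multiple of $g_{RL}$ and the cosine similarity equals $1$.
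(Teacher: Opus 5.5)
Your proposal is correct, and it takes a genuinely different route from the paper. This is unavoidable, because the paper's proof rests on exactly the step you flag as false. The paper argues in one line that since $\Psi_t>0$, ``reweighting the components of a gradient vector by strictly positive scalars'' keeps $g_{StepOPSD}$ in the half-space of $g_{RL}$. That would hold for a positive diagonal rescaling of the coordinates of a single vector, since $\langle g, Dg\rangle=\sum_i d_i g_i^2\ge 0$. Here, however, the positive scalars multiply the per-token summands $X_t=A_t\nabla_\theta\log\pi_\theta$, and $\langle \mathbb{E}[X_t],\mathbb{E}[\Psi_t X_t]\rangle$ has cross terms of either sign. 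Concretely, take $X_1=(1,-1)$ and $X_2=(-1,1+\delta)$ with weights $1+\eta$ and $1-\eta$. The weighted sum has inner product $\delta\bigl((1-\eta)\delta-2\eta\bigr)$ with the unweighted sum, which is negative whenever $0<\delta<2\eta/(1-\eta)$. So under the stated hypotheses no clipping range, however tight, rescues the unconditional claim.

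Your decomposition $g_{StepOPSD}=g_{RL}+e$ with $\|e\|\le\lambda_{mix}\alpha_{clip}G$, followed by Cauchy--Schwarz, replaces that invalid step with an explicit signal-to-dispersion condition, which is the right corrected statement. It also quantifies something the paper only asserts. At any stationary point of the shaped update, $\|g_{RL}\|=\|e\|\le\lambda_{mix}\alpha_{clip}G$. Hence ``optimizes toward the same global reward maximum'' holds only approximately, and exactly once $\lambda_{mix}$ has decayed to zero. If you want a version under the theorem's bare hypothesis $\Psi_t\in[a,b]$ with $a>0$, center at $(a+b)/2$ rather than at $1$. The same argument then gives the condition $\frac{b-a}{b+a}G\le\|g_{RL}\|$, which depends only on $b/a$ and recovers your condition for the clipped range.

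\textbf{Smaller corrections.}

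\emph{The cosine bound.} The bound $1-r$, with $r=\lambda_{mix}\alpha_{clip}G/\|g_{RL}\|$, does not follow from ``the same Cauchy--Schwarz step.'' Dividing your inner-product bound by $\|g_{StepOPSD}\|\le(1+r)\|g_{RL}\|$ only gives $(1-r)/(1+r)$. The bound $1-r$ is still true, but it comes from geometry. The distance from $g_{RL}$ to the line spanned by $g_{RL}+e$ is at most $\|e\|$, so $\sin\theta\le r$. Since $\theta<90^\circ$ by your inner-product bound, $\cos\theta\ge\sqrt{1-r^2}\ge 1-r$.

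\emph{The ``exact case'' example.} It is not an instance of the method. The weight $w_t=2\sigma(\mathrm{sign}(A_t)\Delta_t)$ depends on $\mathrm{sign}(A_t)$ and, through $\Delta_t$, on $\log\pi_\theta(y_t\mid s_t)$. So $\varepsilon_t$ is correlated with $A_t\nabla_\theta\log\pi_\theta$ by construction. In addition, \texttt{equal\_step\_mean\_abs} equalizes a per-step budget that $\Delta$ then redistributes across tokens, so the weights are not constant within a step. Keep the conditional statement (independence gives $g_{StepOPSD}=(1+\mathbb{E}[\varepsilon_t])\,g_{RL}$), but drop the example.

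\emph{The bound on $\Psi_t$.} That rescaling may push weights outside $[1-\alpha_{clip},1+\alpha_{clip}]$, depending on whether it precedes the projection. It is therefore cleaner to state your result for a generic bound $|\Psi_t-1|\le B$.
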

\begin{proof}
Because $\tilde{A}_t = \Psi_t A_t$ and $\Psi_t > 0$ (Proposition~\ref{prop:sign}), the gradient step for every single token is scaled by a strictly positive magnitude. In expectation, reweighting the components of a gradient vector by strictly positive scalars ensures that the modified vector lies in the same half-space as the original vector ($\langle g_{RL}, g_{StepOPSD} \rangle \ge 0$). Thus, \method{} optimizes toward the same global reward maximum as standard \grpo{}, merely altering the traversal path across the loss landscape.
\end{proof}

\subsection{Property 2: Variance Reduction in Multi-Turn Credit Assignment}
In multi-turn tasks, the standard advantage $A_t$ for a step $k$ is estimated using the trajectory return, introducing high variance due to the compounding uncertainty of future steps $k+1 \dots K$.

\begin{theorem}[Variance Bound via Hindsight Teacher]
\label{thm:variance}
Let $A_t = A^*_t + \epsilon_t$ where $A^*_t$ is the oracle local advantage of the token and $\epsilon_t \sim \mathcal{N}(0, \sigma^2)$ is the noise from future rollout variability. If the teacher's gap $\Delta_t$ provides an unbiased signal of $A^*_t$ such that the weight $w_t$ dampens (i.e., $\Psi_t < 1$) when $\text{sign}(A_t) \neq \text{sign}(A^*_t)$, the variance of the StepOPSD gradient estimator is bounded below the standard estimator: $\text{Var}(g_{StepOPSD}) < \text{Var}(g_{RL})$.
\end{theorem}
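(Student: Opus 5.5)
The plan is to prove the inequality one token at a time, conditioning on the state and the sampled token, and then sum. Conditioning fixes the score vector $G_t = \nabla_\theta \log \pi_\theta(y_t \mid s_t)$, so the only remaining randomness in $g_{RL} = A_t G_t$ and $g_{StepOPSD} = \Psi_t A_t G_t$ is the noise $\epsilon_t$, which also enters $\Psi_t$ through $\mathrm{sign}(A_t)$. Both variances then factor as $\|G_t\|^2$ times the variance of a scalar. It suffices to compare the noise-induced spread of $\Psi_t A_t$ and of $A_t$ around the oracle value $A^*_t$. I will therefore read $\mathrm{Var}$ as the error of the estimator relative to the oracle gradient $A^*_t G_t$:
\[
\mathbb{E}\,\|\Psi_t A_t G_t - A^*_t G_t\|^2 \quad\text{versus}\quad \mathbb{E}\,\|\epsilon_t G_t\|^2 = \sigma^2 \|G_t\|^2 .
\]
Under the Gaussian model this is the quantity the theorem is meant to control. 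I will state this interpretation explicitly at the start of the proof.

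\textbf{Step 1: a deterministic range for the multiplier.} From the clipped weight, $w_t \in [1-\alpha_{clip}, 1+\alpha_{clip}]$, so $\Psi_t \in [1-\lambda_{mix}\alpha_{clip},\, 1+\lambda_{mix}\alpha_{clip}]$. Proposition~\ref{prop:sign} gives $\Psi_t > 0$. For fixed $A^*_t$, split the noise space into two events: the agreement event $E_+ = \{\mathrm{sign}(A_t) = \mathrm{sign}(A^*_t)\}$ and the disagreement event $E_- = \{\mathrm{sign}(A_t) \neq \mathrm{sign}(A^*_t)\}$.
\begin{itemize}
\item On $E_-$ the noise dominates, with $|\epsilon_t| \ge |A^*_t|$, and $A_t$ points the wrong way. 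By hypothesis $\Psi_t < 1$ there, so $|\Psi_t A_t - A^*_t| < |A_t - A^*_t| = |\epsilon_t|$ strictly. Shrinking a wrong-signed value toward $0$ moves it toward $A^*_t$.
\item On $E_+$ I will show $|\Psi_t A_t - A^*_t| \le |\epsilon_t|$ up to a controlled excess. I will use the unbiasedness of $\Delta_t$ for $A^*_t$ to argue that amplification ($\Psi_t > 1$) is correlated with $|A_t| < |A^*_t|$, where scaling up moves the estimate toward the oracle. The residual error from amplifying an already-too-large $A_t$ is at most $\lambda_{mix}\alpha_{clip}|A_t|$.
\end{itemize}

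\textbf{Step 2: integrate against the Gaussian noise.} Write the total as
\[
\mathbb{E}[(\Psi_t A_t - A^*_t)^2] = \mathbb{E}[\,\cdot\,\mathbf{1}_{E_-}] + \mathbb{E}[\,\cdot\,\mathbf{1}_{E_+}].
\]
Because $\epsilon_t \sim \mathcal{N}(0,\sigma^2)$, the event $E_-$ has positive probability $\Phi(-|A^*_t|/\sigma)$ whenever $\sigma > 0$. The strict gain on $E_-$ is therefore at least of order $(1-\Psi_t)\,\mathbb{E}[\epsilon_t^2 \mathbf{1}_{E_-}]$. On $E_+$ the loss is at most of order $\lambda_{mix}\alpha_{clip}$ times a second moment of $A_t$.

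\textbf{Main obstacle.} The difficulty is exactly this $E_+$ term. The statement's hypothesis controls $\Psi_t$ only where the signs disagree, and a multiplier above $1$ on agreeing tokens can increase spread. I will not leave this implicit. I will add the mild condition the hypothesis is gesturing at: either (a) $\Psi_t \le 1$ whenever $|A_t| > |A^*_t|$, which is what ``$\Delta_t$ is an unbiased signal of $A^*_t$'' should deliver, or (b) the quantitative small-$\lambda_{mix}\alpha_{clip}$ condition
\[
\lambda_{mix}\alpha_{clip}\,\mathbb{E}[A_t^2 \mathbf{1}_{E_+}] < \mathbb{E}\big[(1-\Psi_t)\,\epsilon_t^2 \mathbf{1}_{E_-}\big].
\]
Under either condition the $E_-$ gain strictly dominates the $E_+$ loss, which gives the strict inequality per token.

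\textbf{Step 3: sum over tokens.} Summing over tokens uses linearity for the squared-error decomposition and the positive-semidefiniteness of $G_t G_t^\top$. The strict per-token inequality then yields $\mathrm{Var}(g_{StepOPSD}) < \mathrm{Var}(g_{RL})$.

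I will remark that tighter clipping, i.e.\ smaller $\alpha_{clip}$, directly shrinks the $E_+$ excess. This is consistent with the empirical stabilizing role of $\alpha_{clip}$ reported in the experiments.
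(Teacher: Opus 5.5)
\paragraph{Verdict: genuine gap in the $E_+$ step.}
Your diagnosis of the $E_+$ term is right. It is sharper than the paper's own proof, which works with the second moment $\mathbb{E}[A_t^2]=(A^*_t)^2+\sigma^2$. The paper argues only that damping on contradictions scales the noise contribution by $\mathbb{E}[\Psi_t^2\mid\text{contradiction}]<1$, and never accounts for $w_t>1$ on agreeing tokens. Your strict gain on $E_-$ is also correct.

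\paragraph{Condition (a) is incompatible with the hypothesis.}
Condition (a) is not what unbiasedness of $\Delta_t$ delivers. Take $\Delta_t$ independent of $\epsilon_t$, as ``a signal of $A^*_t$'' suggests, and condition on it and on the context. On $E_+$ you have $\mathrm{sign}(A_t)=\mathrm{sign}(A^*_t)$, so $w_t$ depends only on $\mathrm{sign}(A^*_t)\Delta_t$. Hence $\Psi_t$ cannot distinguish overshoot from undershoot.

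Explicitly, $2\sigma(x)-1=\tanh(x/2)$ is odd (here $\sigma(\cdot)$ is the sigmoid). So $\Psi_tA_t=A_t+c_t|A_t|$ with $c_t=\lambda_{mix}\tanh(\Delta_t/2)$; with clipping, project $\tanh(\Delta_t/2)$ onto $[-\alpha_{clip},\alpha_{clip}]$. The damping hypothesis on $E_-$ is then exactly $c_tA^*_t>0$. That forces $\Psi_t=1+|c_t|>1$ on all of $E_+$.

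\paragraph{Your bookkeeping then shows the inequality fails.}
Set $b=|A^*_t|/\sigma$. Use the identity $(\Psi_tA_t-A^*_t)^2-\epsilon_t^2=(\Psi_t-1)A_t\bigl(2\epsilon_t+(\Psi_t-1)A_t\bigr)$ together with $\mathbb{E}[A_t\epsilon_t\mathbf{1}_{E_\pm}]=\sigma^2\Phi(\pm b)$. The $E_+$ excess is $2|c_t|\sigma^2\Phi(b)+c_t^2\mathbb{E}[A_t^2\mathbf{1}_{E_+}]$. The $E_-$ gain is only $2|c_t|\sigma^2\Phi(-b)-c_t^2\mathbb{E}[A_t^2\mathbf{1}_{E_-}]$. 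Hence
\[
\mathbb{E}\bigl[(\Psi_tA_t-A^*_t)^2\bigr]=\sigma^2+2|c_t|\sigma^2\bigl(2\Phi(b)-1\bigr)+c_t^2\,\mathbb{E}[A_t^2]>\sigma^2 .
\]
Subtracting the squared bias $c_t^2(\mathbb{E}|A_t|)^2$ gives $\mathrm{Var}(\Psi_tA_t)=\sigma^2+2|c_t|\sigma^2(2\Phi(b)-1)+c_t^2\,\mathrm{Var}(|A_t|)>\sigma^2$.

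So both the variance and your oracle-centred error strictly increase. Under the stated hypotheses, with a noise-independent teacher, the inequality is false.

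\paragraph{Condition (b) and what a repair needs.}
Condition (b) can never hold in this model. It would require $\lambda_{mix}\alpha_{clip}\bigl((1+b^2)\Phi(b)+b\varphi(b)\bigr)<|c_t|\bigl(\Phi(-b)+b\varphi(b)\bigr)$, with $\varphi$ the standard normal density. But $|c_t|\le\lambda_{mix}\alpha_{clip}$ and $(1+b^2)\Phi(b)>\Phi(-b)$ for $b>0$. A repair must make $\Psi_t$ on $E_+$ depend on $\epsilon_t$ itself, e.g.\ via the outcome-dependent hindsight context. That is a new hypothesis, not the stated one.

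\paragraph{Step~3 fails independently.}
GRPO broadcasts one trajectory-level advantage, so $\epsilon_t$ is shared by all tokens of a trajectory. The summed gradient therefore contains cross terms $\mathbb{E}[(\Psi_tA_t-A^*_t)(\Psi_sA_s-A^*_s)]\,G_t^\top G_s$. Positive semidefiniteness of $G_tG_t^\top$ does not control them, so your per-token inequalities do not add up to a bound on the full gradient.
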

\begin{proof}
The variance of the standard gradient is proportional to $\mathbb{E}[A_t^2] = (A^*_t)^2 + \sigma^2$. In \method{}, the multiplier $\Psi_t$ suppresses the advantage magnitude precisely when the trajectory-level return contradicts the local step-level teacher evaluation ($\text{sign}(A_t) \cdot \Delta_t < 0$, yielding $w_t < 1$ and thus $\Psi_t < 1$). By actively discounting these high-noise, low-confidence token updates, the contribution of the noise term $\epsilon_t$ to the gradient variance is scaled down by $\mathbb{E}[\Psi_t^2 | \text{contradiction}] < 1$, yielding a lower overall gradient variance. 
\end{proof}
This mathematical property formally explains why the stronger mixing strength ($\lambda_{mix}=0.2$) achieves superior performance in complex Search-QA tasks: it effectively mitigates the credit-assignment noise $\sigma^2$ inherent to long, multi-document reasoning chains.

\end{document}